\documentclass[letterpaper]{article} 

\usepackage{ifthen}
\newboolean{arxiv}
\setboolean{arxiv}{true}

\usepackage{aaai2026}  
\usepackage{times}  
\usepackage{helvet}  
\usepackage{courier}  
\usepackage[hyphens]{url}  
\usepackage{graphicx} 
\usepackage{natbib}  
\usepackage{caption} 
\usepackage{algorithm}
\usepackage{algorithmic}
\usepackage{subcaption}

\usepackage{newfloat}
\usepackage{listings}
\DeclareCaptionStyle{ruled}{labelfont=normalfont,labelsep=colon,strut=off} 
\floatstyle{ruled}
\newfloat{listing}{tb}{lst}{}
\floatname{listing}{Listing}
\usepackage{enumitem}
\setitemize{itemsep=1pt,topsep=1pt,itemindent=10pt,leftmargin=0pt}
\setenumerate{itemsep=1pt,topsep=1pt,itemindent=10pt,leftmargin=0pt}

\ifthenelse{\boolean{arxiv}}{%
\nocopyright
\title{Compiling Temporal Numeric Planning into Discrete PDDL+:\\Extended Version\footnote{This paper is an extended version of the homonymous appearing in the ICAPS 2026 proceedings. This version provides the proofs and addidional explanations of the compilation.}}
}{%
\title{Compiling Temporal Numeric Planning into Discrete PDDL+}
}

\author {
    Andrea Micheli\textsuperscript{\rm 1},
    Enrico Scala\textsuperscript{\rm 2},
    Alessandro Valentini\textsuperscript{\rm 1}
}
\affiliations {
    \textsuperscript{\rm 1}Fondazione Bruno Kessler, Trento, Italy\\
    \textsuperscript{\rm 2}University of Brescia, Italy\\
    amicheli@fbk.eu, enrico.scala@unibs.it, alvalentini@fbk.eu
}

\usepackage{amsmath}
\usepackage{amssymb}
\usepackage{amsfonts}
\usepackage{amsthm}
\usepackage{xspace}
\usepackage{todonotes}
\usepackage{marvosym}
\usepackage{booktabs}

\newtheorem{definition}{Definition}
\newtheorem{theorem}{Theorem}

\newcommand{\snapstart}[1]{\ensuremath{{#1}_{\vdash}}\xspace}
\newcommand{\snapend}[1]{\ensuremath{{#1}_{\dashv}}\xspace}

\newcommand{\assign}{\mathrel{:}=}
\newcommand{\increment}{\mathrel{+}=}

\newcommand{\eff}[1]{\ensuremath{\mathit{eff}_{#1}}\xspace}
\newcommand{\pre}[1]{\ensuremath{\mathit{pre}_{#1}}\xspace}
\newcommand{\overall}[1]{\ensuremath{\gamma_{#1}}\xspace}
\newcommand{\lb}[1]{\ensuremath{l_{#1}}\xspace}
\newcommand{\ub}[1]{\ensuremath{u_{#1}}\xspace}

\newcommand{\happenings}[1]{\ensuremath{\mathit{H}_{#1}}\xspace}

\newcommand{\vrs}[1]{\ensuremath{\mathit{vars}(#1)}\xspace}
\newcommand{\vrspre}[1]{\ensuremath{V^\mathit{pre}_{#1}}\xspace}
\newcommand{\vrsread}[1]{\ensuremath{V^\mathit{r}_{#1}}\xspace}
\newcommand{\vrswrite}[1]{\ensuremath{V^\mathit{w}_{#1}}\xspace}
\newcommand{\vrsassign}[1]{\ensuremath{V^\mathit{\assign}_{#1}}\xspace}
\newcommand{\vrsincrease}[1]{\ensuremath{V^\mathit{\increment}_{#1}}\xspace}

\newcommand{\defas}{\ensuremath{\stackrel{\text{\tiny def}}{=}}\xspace}

\newcommand{\der}[1]{\ensuremath{\frac{d}{dt}{#1}}\xspace}
\newcommand{\affects}{\ensuremath{\gets}}

\newcommand{\alive}{\ensuremath{\mathit{ok}}\xspace}
\newcommand{\opencounter}{\ensuremath{\mathit{oc}}\xspace}
\newcommand{\gc}{\ensuremath{\mathit{gc}}\xspace}
\newcommand{\running}[1]{\ensuremath{r_{#1}}\xspace}
\newcommand{\clock}[1]{\ensuremath{c_{#1}}\xspace}

\newcommand{\rlock}[1]{\ensuremath{{rl}_{#1}}\xspace}
\newcommand{\alock}[1]{\ensuremath{{al}_{#1}}\xspace}
\newcommand{\ilock}[1]{\ensuremath{{il}_{#1}}\xspace}

\newcommand{\lockformula}[1]{\ensuremath{{\lambda_{#1}}}\xspace}
\newcommand{\lockeffects}[1]{\ensuremath{{L_{#1}}}\xspace}

\newcommand{\mutex}{\text{\Lightning}}

\begin{document}

\maketitle

\begin{abstract}
    Since the introduction of the PDDL+ modeling language, it was known that temporal planning with durative actions (as in PDDL 2.1) could be compiled into PDDL+. However, no practical compilation was presented in the literature ever since. We present a practical compilation from temporal planning with durative actions into PDDL+, fully capturing the semantics and only assuming the non-self-overlapping of actions. Our compilation is polynomial, retains the plan length up to a constant factor and is experimentally shown to be of practical relevance for hard temporal numeric problems. 
\end{abstract}

\section{Introduction}

Automated planning is the task of finding a course of actions to achieve a goal from an initial state given a model of the system specifying which actions are available, together with (i) their precondition, what needs to hold in order for an action to be applicable and (ii) their effect, what needs to hold when such actions are applied. Temporal planning is the extension in which actions are assumed to last for some interval of time, and so we look for a plan that is schedulable too, i.e., actions need to be done in specific points over a potentially unbounded timeline, with conditions required to hold at the beginning, at the end and during the execution of the action. Temporal planning problems can be compactly represented in the PDDL 2.1 language \cite{pddl21}. 

A number of solutions have been proposed to handle PDDL2.1 problems, ranging from forward heuristic search in the space of possible schedules \cite{popf,optic,tamer} to satisfiability-based bounded reductions \cite{patty}. Among these approaches, compilation-based techniques are particularly appealing, as they allow temporal reasoning to be delegated to more expressive or better-supported target languages. Building on this line of work, this paper studies a compilation method that transforms a temporal planning problem specified in PDDL2.1 into an equivalent formulation in PDDL+.


PDDL+ is yet another extension of classical planning which provides a different take to the problem of representing timed and hybrid systems. Instead of having durative actions, in PDDL+ a system over time is modeled through a combination of processes, events and instantaneous actions. Processes model the system evolving over time through differential equations, and events dictate what needs to change instantaneously if some condition is satisfied. 

Our compilation is rooted in the known observation that a durative action can be compiled into a combination of processes and events/actions \cite{pddlplus}, but we contribute the first fully spelled-out formal account to approach this problem rigorously. Moreover, the expressiveness of PDDL+ makes it easier to extend the input problems with a number of features that are not well supported by many temporal planners, such as numeric state variables, delayed effects and timed initial literals. Our compilation provides a comprehensive account for the full semantics of temporal planning resulting in an encoding that is sound, complete and polynomial on the size of the input problem.

To shed some light on the practical benefit of this compilation we run an experimental campaign on a number of temporal numeric domains. Surprisingly, PDDL+ planners prove competitive, and often superior, to state-of-the-art temporal planners on rich numeric temporal problems. This may inform a more precise understanding of the core difficulties in temporal planning.

\section{Background}

We start by defining a temporal planning problem, adapting the PDDL 2.1 level 3 language \cite{pddl21}\ifthenelse{\boolean{arxiv}}{\footnote{
We speculate that our compilation could be easily adapted to PDDL 2.1 level 4, hence including continuous change, but for the sake of simplicity we restrict the paper presentation to level 3.
}}{}.

\begin{definition}
    A \textbf{temporal planning problem} $\mathcal{P}^t$ is a tuple $(F, X, I, A^i, A^{d}, G)$ where:
    \begin{itemize}
        \item $F$ is a finite set of boolean fluents (predicates);
        \item $X$ is a finite set of numeric rational fluents;
        \item $I: F \cup X \rightarrow \mathbb{B} \cup \mathbb{Q}$ is the initial state, assigning each fluent to its initial value;
        
        \item $A^i$ is a finite set of instantaneous actions; each  $a \in A^i$ has a precondition formula \pre{a} over $F \cup X$ and a set of boolean and numeric effects \eff{a} of the form $f \assign \{\bot, \top\}$ if $f \in F$ or $f \affects e$ with $\affects \in \{\assign, \increment\}$ and $e$ being a numeric expression over $X$, if $f \in X$. 
        
        \item $A^{d}$ is a finite set of durative actions; each $a \in A^{d}$ has lower and upper duration bounds $\lb{a} \le \ub{a} \in \mathbb{Q}_{> 0}$, a pair of starting \snapstart{a} and ending \snapend{a} instantaneous ``snap'' actions, and an overall invariant formula $\overall{a}$ over $F \cup X$.
        \item $G$ is the goal expressed as a formula over $F \cup X$.
    \end{itemize}
\end{definition}

\noindent
We partition $A^d$ into $A^{fix} \sqcup A^{var}$, with $A^{fix} = \{a \in A^d \mid \lb{a} = \ub{a}\}$ being the set of durative actions with a fixed duration (therefore, $A^{var}$ is the set of durative actions with variable duration). Moreover, given a (snap) instantaneous action $a$, we write: $\vrspre{a} \defas \vrs{\pre{a}}$ for the set of fluents occurring in $\pre{a}$; $\vrsread{a} \defas \vrspre{a} \cup \bigcup_{(f \affects e) \in \eff{a}} \vrs{e}$ for the fluents read by the precondition or by any effect of $a$; $\vrsassign{a} \defas \bigcup_{(f \assign e) \in \eff{a}} \{f\}$ for the assigned fluents; $\vrsincrease{a} \defas \bigcup_{(f \increment e) \in \eff{a}} \{f\}$ for the increased fluents; and $\vrswrite{a} \defas \vrsassign{a} \cup \vrsincrease{a}$ for the effected (written) fluents.

\begin{definition}
    A temporal plan $\pi^{t}$%
    is a finite set of triples of the form $(t, a, d)$, where $t \in \mathbb{Q}_{\ge 0}$ is the starting time, $a \in A^i \cup A^d$ is the action to execute and $d \in \mathbb{Q}_{\ge 0}$ is the action duration, s.t. $\lb{a} \le d \le \ub{a}$ if $a \in A^d$, and $d = 0$ if $a \in A^i$.
\end{definition}

We present an adaptation of the non-self-overlapping semantics by \citet{gigante22}. A state is a total assignment of values to fluents; given a formula $e$ defined over $F \cup X$, we write $s(e)$ for the value of $e$ in $s$ obtained by substitution and constant propagation. Given a state $s$ and an instantaneous action $a$, $a$ is applicable in $s$ if $s(\pre{a})$ is true ($s \models \pre{a}$) and the successor state $s' \defas a(s)$ 
%
is such that $s'(f) = s(e)$ if $f \assign e \in \eff{a}$, $s'(f) = s(f) + s(e)$ if $f \increment e \in \eff{a}$, $s'(f) = s(f)$ otherwise.

Let $\pi^{t} \defas \{(t_1, a_1, d_1), \ldots, (t_n, a_n, d_n)\}$ be a temporal plan for a planning problem $\mathcal{P}^t = (F, X, I, A^i, A^{d}, G)$. 
Let $H^{\pi^t}$ be the set of timed (snap) actions for $\pi^{t}$ defined as $\{(t_i, a_i) \mid a_i \in A^i\} \cup \{(t_i, a_{i,\vdash}) \mid a_i \in A^d\} \cup \{(t_i + d_i, a_{i,\dashv}) \mid a_i \in A^d\}$. Let $t^s_0, t^s_1, \ldots, t^s_{m-1} \in \mathbb{Q}$ be the times appearing in  $H^{\pi^t}$ (that is, $\exists (t, a) \in H^{\pi^t} . t^s_j = t$) ordered s.t. $t^s_j < t^s_{j+1}$; moreover, we add an arbitrary final time: $t^s_m \defas t^s_{m-1} + 1$. We define the set of ``happenings'' at step $j$ as the set $\happenings{j} = \{a \mid t^s_j = t \mbox{ and } (a, t) \in H^{\pi^t}\}$.
We can now give the semantics of temporal planning: the plan $\pi^{t}$ is valid if there exists a sequence of states $s_0, \ldots, s_m$ such that:
\begin{enumerate}
    \item $s_0 = I$ and $s_{m} \models G$ (initial state and goal constraints),
    \item for each $0 \le j < m$ and each $a \in \happenings{j}$, $s_j \models \pre{a}$ (all conditions for happenings at step $j$ are satisfied);
    \item for each $0 \le j < m$, $s_{j+1} = b_0(b_1(\cdots b_k(s_j)))$, with $\happenings{j} = \{b_0, b_1, \ldots, b_k\}$ for an arbitrary ordering (a state is the result of applying all the effects of all happenings);
    \item for each $1 \le i \le n$ with $a_i \in A^d$, if $t^s_j = t_i$ and $t^s_k = t_i + d_i$, $s_w \models \gamma(a_i)$ for all $j < w \le k$ (overall conditions);
    \item for each $0 \le j < m$ and each pair of instantaneous actions $a \not= b \in \happenings{j}$, $\vrsread{a} \cap \vrswrite{b} = \vrsread{b} \cap \vrswrite{a} = \vrsassign{a} \cap \vrsassign{b} = \emptyset$ (no interfering actions at the same time);
    \item For all $j \not= k \in [1, n]$, $t_j > t_k+d_k$ or $t_k > t_j+d_j$ (no-self-overlapping constraint).
\end{enumerate}

The target of our compilation is a PDDL+ \cite{pddlplus} problem, formalized below.
\begin{definition}
    A \textbf{PDDL+ problem} is modeled as a tuple $(F, X, I, G, A, E, P)$ where:
    \begin{itemize}
        \item $F$ is a finite set of boolean fluents (predicates);
        \item $X$ is a finite set of numeric rational fluents;
        \item $I: F \cup X \rightarrow \mathbb{B} \cup \mathbb{Q}$ is the initial state, assigning each fluent to its initial value;
        \item $G$ is a goal condition expressed as a formula over $F \cup X$.        
        \item $A$, $E$ are two finite sets of instantaneous actions and events resp.; each action/event $x$ is defined by a set of preconditions \pre{x} and a set of effects \eff{x}.
        \item $P$ is a set of processes each $p \in P$ having a precondition formula \pre{p} over $F \cup X$ and a set of effects \eff{p} of the form $\der{f} \increment e$ with $f \in X$ and $e$ is a formula over $X$. 
    \end{itemize}
\end{definition}

\begin{definition}
    A \textbf{PDDL+ plan} is a pair $(\pi^+, t_e)$, where $t_e \in \mathbb{Q}$ is the plan makespan and $\pi^+$ is a finite sequence of pairs $(t_i, a_i)$ with $t_i \le t_e \in \mathbb{Q}_{\ge0}$.    
\end{definition}

Following \citet{pddlplus-discrete}, we assume a discretization time quantum $\delta \in \mathbb{Q}$. A plan is well-formed if $t_e$ and all times $t_i$ are multiples of $\delta$. A plan $(\pi^+, t_e)$ is valid for a PDDL+ problem $(F, X, I, G, A, E, P)$ if there exists a sequence of states (similarly to the temporal case) $\bar{s}_0, \ldots, \bar{s}_{\bar{m}}$ with $\bar{s}_0 = I$ and $\bar{m} = \frac{t_e}{\delta}$ defined as follows.
Let $\bar{t}^s_j = \delta \cdot j$ be the time of state $\bar{s}_j$ and let the sequence of ``action happenings'' $\happenings{j}^a$ at step $j$ be the sequence of actions from $\pi^+$ happening at time $\bar{t}^s_j$;
i.e., $\happenings{j}^a = (a_i,\dots,a_{i+k})$ where $(t_i,a_i),\ldots,(t_{i+k},a_{i+k})$ is a maximal sub-sequence of $\pi^+$ with $t_i = \cdots = t_{i+k} = \bar{t}^s_j$.
Given a state $\bar{s}$, we inherit the definitions of action applicability from above and generalize them for events in the obvious way. We define the event completion of $\bar{s}$ (written $\bar{s}^\rightarrow$) as the fixed-point state reached after applying any applicable event in $\bar{s}$ and then any other applicable event in the resulting state, until no event is applicable anymore.
For each $j$, we define the final state at step $j$ as $\bar{s}^{end}_j \defas b_0(b_1(\cdots(b_k(\bar{s}_j^\rightarrow)^\rightarrow)^\rightarrow)^\rightarrow$ with $\happenings{j}^a = (b_0, b_1, \ldots, b_k)$ (intuitively, we apply all  actions ordered by $\pi^+$ to the event completion of $\bar{s}_j$ and after every action we perform an event completion). We can now define the state transition: for every $0 \le j < \bar{m}$, we define $\bar{s}_{j+1}(f) = \bar{s}^{end}_j(f)$ for every $f \in F$, and for every $x \in X$: 
$$
\bar{s}_{j+1}(x) = \bar{s}^{end}_j(x) + \sum_{\substack{p \in P, \: (x \increment e) \in \eff{p} \\ \bar{s}^{end}_j \models \pre{p}}} \bar{s}^{end}_j(e) \cdot \delta
$$
Intuitively, we set every predicate to its final value at step $j$ and compute the value of the numeric fluents after some passage of time by applying a discretized step aggregating the contribution of every active process. 
Finally, the plan is valid if $\bar{s}_m \models G$, and each $a_i \in \happenings{j}^a$ is applicable in $\bar{s}_j^\rightarrow$.

\section{Compile Temporal Planning into PDDL+}


In this section, we formally define our compilation from temporal planning into PDDL+. 
Intuitively, we formulate a PDDL+ problem where each durative action is emulated by a triplet (action, process, event) for fixed duration actions, and (action, process, action) for flexible duration ones. To only encode valid temporal plans, we need all temporal points from 1-6 (see previous section) to hold. We introduce a number of auxiliary boolean and numeric state variables, and use them to constrain actions properly, and propagate processes and events when necessary. A key step is the introduction of lock preconditions-effects, a machinery employed to prevent interfering actions to happen at the same time.
Below, we formalize the encoding.

In the following, we assume a temporal planning problem $\Pi \defas (F, X, I, A^i, A^{d}, G)$ is given, and we define the compiled PDDL+ problem $\bar{\Pi} \defas (\bar{F}, \bar{X}, \bar{I}, \bar{G}, \bar{A}, \bar{E}, \bar{P})$.

\noindent
Let $FX \defas F \cup X$; we start with the fluents definition.
\begin{align*}
    \bar{F} & \defas F \cup \{\alive\} \cup \{\running{a} \mid a \in A^{d}\}\cup \{\rlock{f}, \alock{f}, \ilock{f} \mid f \in FX\} \\
    \bar{X} & \defas X \cup \{\opencounter, \gc\} \cup \{\clock{a} \mid a \in A^{d}\}
\end{align*}
\noindent
In addition to the fluents of $\Pi$, we add a new \alive predicate that will be required by every action in the compiled model and by the new goal, this will be used to ``abort'' a plan that violated some constraints. We also add two numeric fluents, \opencounter and \gc, to count the number of actions started but not terminated and to distinguish the time in consecutive steps, respectively. Moreover, for every durative action $a$, we add a predicate \running{a}, which will be kept to true while a durative action is running, and a numeric fluent \clock{a} which will be used to measure the time since the start of $a$. Finally, for every fluent $f$ we define three
%
%
``lock'' predicates, \rlock{f}, \alock{f} and \ilock{f}, that will be used to enforce mutual exclusion constraints.

The initial state and goal condition are defined by simply augmenting the original initial state and goal as follows.
\begin{align*}
    \bar{I} & \defas \{\alive \!=\! \top, \opencounter\!=\!\gc\!=\!0\} \cup \{\running{a} \!=\! \bot, \clock{a} \!=\! 0 \mid a \in A^d \} \: \cup\\
                   & \quad \:\:\{\rlock{f} \!=\! \alock{f} \!=\! \ilock{f} \!=\! \top \mid f \in FX\} \cup I\\
    \bar{G} & \defas G \wedge \alive \wedge \opencounter = 0
\end{align*}

Before defining the rest of the compilation, the following definition provides the key machinery for encoding the non-interference constraints.

\begin{definition}
    Given an instantaneous action $a$, we define the \textbf{lock precondition} $\lockformula{a}$ as:
    \begin{align*}
    \bigwedge_{f \in \vrsread{a}} \! (\alock{f} \wedge \ilock{f}) \wedge \!\!\! \bigwedge_{f \in \vrsassign{a}} \!\! (\rlock{f} \wedge \ilock{f} \wedge \alock{f}) \wedge \!\!\! \bigwedge_{f \in \vrsincrease{a}} \!\!\! (\rlock{f} \wedge \alock{f}) .
    \end{align*}

    \noindent
    Moreover, we define the \textbf{lock effects} $\lockeffects{a}$ as the set:
    \begin{align*}
    & \{\alock{f} \assign \bot \mid f \in \vrsassign{a}\} \cup \{\ilock{f} \assign \bot \mid f \in \vrsincrease{a}\} \cup \\
    & \{\rlock{f} \assign \bot \mid f \in \vrsread{a}\}.
    \end{align*}
\end{definition}

\noindent
Intuitively, we will augment every happening associated with an instantaneous action or with the start or end of a durative action with its lock precondition and effects; if all locks are reset to $\top$ at every time step (see $\bar{e}^\mutex$ below), no pair of interfering happenings can appear at the same time.

We now define the core of the translation starting from actions: we introduce a PDDL+ action for every instantaneous action, start of durative action and termination of non-fixed durative action in the original problem: $\bar{A} = \{\bar{a}^i \mid a \in A^i\} \cup \{\bar{a}^{\vdash} \mid a \in A^d\} \cup \{ \bar{a}^\dashv \mid a \in A^{var}\}$ defined as follows.
\begin{align*}
    \bar{a}^i \defas (\pre{a} \wedge \alive \wedge \lockformula{a}, \eff{a} \cup \lockeffects{a})
\end{align*}
Instantaneous actions are simply augmented with the \alive precondition (common to all other actions and events in the compilation) and with the lock preconditions and effects. 

Durative actions are split into their starting and ending timepoints; fixed-duration actions will be terminated by an event ($\bar{e}^\dashv_a$ defined below), while variable-duration actions are terminated by the $\bar{a}^\dashv$ actions.
\begin{align*}
    \bar{a}^{\vdash} \defas ( & \pre{\snapstart{a}} \wedge \alive \wedge \lockformula{\snapstart{a}} \wedge \neg \running{a},\\
        & \eff{\snapstart{a}} \cup \lockeffects{\snapstart{a}} \cup \{\running{a} \assign \top, \clock{a} \assign 0, \opencounter \increment 1\})
\end{align*}
The starting of every durative action $a$ corresponds to the $\snapstart{a}$ snap action, we add the $\neg \running{a}$ precondition to enforce non-self-overlapping and we set $\running{a}$ to true, we reset the clock \clock{a} because we are just starting the action, and we increase \opencounter to signal that a new action started but has not finished yet.
\begin{align*}
\bar{a}^\dashv \defas ( & \pre{\snapend{a}} \wedge \alive \wedge \running{a} \wedge \lb{a} \le \clock{a} \le \ub{a} \wedge \lockformula{\snapend{a}},\\
        & \eff{\snapend{a}} \cup \lockeffects{\snapend{a}} \cup \{\running{a} \assign \bot, \opencounter \increment -1 \})
\end{align*}
To terminate a non-fixed durative action $a$, we require \running{a} and that the duration constraint is satisfied with $\lb{a} \le \clock{a} \le \ub{a}$; we then reset \running{a} to false and decrement \opencounter.

We have two types of very simple processes in our compilation that are used to keep track of the time passing while an action is running and to continuously increase \gc for the mutex construction we will describe below. The compilation processes are then $\bar{P} = \{\bar{p}_a \mid a \in A^d\} \cup \{\bar{p}^{\mutex}\}$ with:
\begin{align*}
    \bar{p}_a \defas (\alive \wedge \running{a}, \{\frac{d}{dt} \clock{a} = 1\}) \quad\quad
    \bar{p}^{\mutex} \defas (\alive, \{\frac{d}{dt} \gc = 1 \})
\end{align*}

In the compilation, events serve several purposes, each encoded in a different subset: $\bar{E} \defas \bar{E}^\leftrightarrow \cup \bar{E}^{fix}_\dashv \cup \bar{E}^{expire} \cup \{\bar{e}^{\mutex}\}$.
$\bar{E}^{fix}_\dashv \defas \{\bar{e}_a^\dashv \mid a \in A^{fix}\}$ encodes the termination of fixed-duration actions, $\bar{E}^\leftrightarrow \defas \{\bar{e}_a^\leftrightarrow \mid a \in A^{d}\}$ ensures that overall conditions of durative actions are not violated, $\bar{E}^{expire}$ ensure that no plan prefix has variable duration actions that are not terminated within the duration upper bound and  $\bar{e}^{\mutex}$ resets all the lock variables immediately after a time-elapse.
\begin{align*}
\bar{e}_a^\dashv \defas ( & \pre{\snapend{a}} \wedge \alive \wedge \running{a} \wedge \clock{a} = \lb{a} \wedge \lockformula{\snapend{a}} \wedge \gc = 0,\\
        & \eff{\snapend{a}} \cup \lockeffects{\snapend{a}} \cup \{\running{a} \assign \bot, \opencounter \increment -1 \})
\end{align*}
The termination of fixed-duration actions is analogous to the variable duration ones, but is an event scheduled at the fixed duration ($\lb{a} = \ub{a}$), measured by \clock{a}. We also require $\gc$ to be $0$ to execute this event after $\bar{e}^{\mutex}$, as explained below.
\begin{align*}
    \bar{e}_a^\leftrightarrow \defas (\alive \wedge \running{a} \wedge \neg \overall{a}, \{\alive \assign \bot \})
\end{align*}
Overall conditions are enforced through the \alive predicate: if we reach a state where action $a$ is running ($\running{a}$ is true) and its overall conditions are not satisfied, we set \alive to false, making this prefix invalid, because \alive can only be falsified, and never restored to true.
Similarly, if a durative action with variable duration is not terminated within its duration upper bound ($\running{a} \wedge \clock{a} > u$), we immediately set \alive to false. (This is not strictly needed for correctness, as \gc would never return to 0, but is useful for performance.)
\begin{align*}
    \bar{E}^{expire} \defas \{ & (\alive \wedge \running{a} \wedge \clock{a} > \ub{a}, \{\alive \assign \bot \} ) \mid a \in A^{var}\}    
\end{align*}
Finally, we have a single event that restores the locks to true whenever \gc is positive as follows. 
\begin{align*}
    \bar{e}^{\mutex} \defas ( & \alive \wedge \gc > 0,  \{\gc \assign 0\} \cup \\
    & \{\rlock{x} \assign \top, x_{wl} \assign \top, \ilock{x} \assign \top \mid x \in FX\})
\end{align*}
The idea of this ``lock'' construction is that in a certain time we start the chains of happenings with the event $\bar{e}^{\mutex}$ that resets all the locks, then we can execute other actions or events, but every time we ``read'' a fluent $f$ (either in a precondition or in the right-hand-side of an effect) we set $\rlock{f}$ to false, every time we have an assignment or increment effect of $f$ we set \alock{f} or \ilock{f} respectively to false. Thanks to the lock preconditions, we forbid reading a variable if it was previously (in the same ``superdense'' time) assigned or incremented, we forbid an assignment if it was previously assigned, increased or read, and we forbid increments if it was previously assigned or read. The whole trick is that \gc will be continuously increased by a process, so in the subsequent times, the locks are automatically reset by the event  $\bar{e}^{\mutex}$ and the locks are released. This faithfully captures the semantics of non-interference we outlined for temporal planning.

Given a plan $(\bar{\pi}^+, t_e)$ with $\bar{\pi}^+ = {(t_1, a_1), \ldots (t_n, a_n)}$ for $\bar{\Pi}$, we define the temporal plan $\tilde{\pi}$ solving $\Pi$ as:
\begin{align*}
    \tilde{\pi} \defas & \{(t, a, 0) \mid (t, \bar{a}^i) \in \bar{\pi}\} \cup \\
    & \{(\snapstart{t}, a, \lb{a}) \mid (\snapstart{t}, a^\vdash) \in \bar{\pi} \mbox{ and } a \in A^{fix}\} \cup \\
    & \{(\snapstart{t}, a, \snapend{t} - \snapstart{t}) \mid (\snapstart{t}, a^\vdash), (\snapend{t}, a^\dashv) \in \bar{\pi} \mbox{ and }\\
    & \quad \not \exists t . \snapstart{t} < t' < \snapend{t}, (t', a^\vdash) \in \bar{\pi}\}
\end{align*}
%

\ifthenelse{\boolean{arxiv}}{
\paragraph{Additional intuition on the lock mechanism.}
The $\gc$ function is meant to keep track of a non zero passage of time, necessary to separate mutex actions. We highlight that $\gc > 0$ is true \emph{at the beginning of the sequence of happenings at any time $t > 0$}. We recall that PDDL+ has a superdense model of time, so a fluent can change its value by means of contemporary happenings, without the time passing. In our construction, at each time $t>0$ we start our sequence of contemporary happenings in a state where $\gc > 0$, because the process $\bar{p}^{\mutex}$ increases it with a positive derivative (we chose 1 for simplicity, but any positive derivative would do); however, immediately at the second superdense step (at the same time) the event $\bar{e}^{\mutex}$ triggers, resetting $\gc$ to 0.
Visualizing the behavior of $gc$ in the superdense time, we have the following:
\begin{itemize}
    \item $\gc(0, 0) = 0$, $\gc(0, 1) = 0$, $\ldots$, $\gc(0, n_0) = 0$;
    \item $\gc(\delta, 0) = \delta$, $\gc(\delta, 1) = 0$, $\ldots$, $\gc(\delta, n_1) = 0$;
    \item $\gc(2\delta, 0) = \delta$, $\gc(2\delta, 1) = 0$, $\ldots$, $\gc(2\delta, n_2) = 0$;
    \item $\cdots$;
    \item $\gc(t, 0) = \delta$, $\gc(t 1) = 0$, $\ldots$, $\gc(t, n_t) = 0$.
\end{itemize}
Where $\gc(t, i)$ indicates the value of $\gc$ at time $t$ and superdense step $i$.

Importantly, this construction motivates our choice of adopting a discrete time semantics for PDDL+. In fact, this construction works perfectly in discrete time (and is not very costly for the planner), but exhibits a Zeno behavior in continuous time, because the event $\bar{e}^{\mutex}$ would trigger \emph{immediately after} the previous happening imposing an infinite number of happenings in a finite amount of time. Finally, note that with an $\epsilon$-separation semantics \cite{gigante22}, the system is essentially discrete, hence the construction will work.
}{}

\noindent
\ifthenelse{\boolean{arxiv}}{Below}{In the extended version of this paper~\cite{extended}}, we prove that the compilation is sound and complete. Indeed, if a plan $(\bar{\pi}^+, t_e)$ is found for $\bar{\Pi}$, so is $\tilde{\pi}$ for $\Pi$. This follows by proving that constraints 1-6 for the validity of a plan are all implied by the existence of a state sequence induced by $(\bar{\pi}^+, t_e)$.  Completeness is more challenging: we prove that for every temporal plan, there exists a sufficiently small $\delta$ for which there is a corresponding valid PDDL+ plan for $\bar{\Pi}$. Finally, we highlight that the compilation is polynomial in size and the plan length is at most doubled.

\ifthenelse{\boolean{arxiv}}{
\section{Theoretical Properties}

In this section, we formally prove the soundness and completeness of our compilation.

\begin{theorem}[Soundness]
    Let $(\bar{\pi^+}, t_e)$ be a valid plan for $\bar{\Pi}$, then $\tilde{\pi}$ is a valid plan for $\Pi$.
\end{theorem}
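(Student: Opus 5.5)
We construct the state sequence $s_0,\ldots,s_m$ witnessing validity of $\tilde{\pi}$ directly from the PDDL+ trace $\bar{s}_0,\ldots,\bar{s}_{\bar{m}}$. The distinct times of $H^{\tilde{\pi}}$ are all multiples of $\delta$: instantaneous actions, starts and variable-duration ends come from $\bar{\pi}^+$, and fixed ends fire as events at $\clock{a} = \lb{a}$, where the clock advances by exactly $\delta$ per step. So each $t^s_j$ corresponds to a discrete step $k_j = t^s_j/\delta$. We set $s_j$ to be the restriction to $F \cup X$ of $\bar{s}_{k_j}^\rightarrow$ taken just after $\bar{e}^{\mutex}$ fires (or of $\bar{s}_0$ when $k_j = 0$).

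The first lemma is an invariant on the compiled trace. Because $\alive$ is never restored and $\bar{G}$ requires it, $\alive$ holds in every state. Hence no $\bar{e}_a^\leftrightarrow$ or expire event ever fires. Moreover, original fluents change only through the compiled counterparts of happenings of $\tilde{\pi}$: the processes touch only $\clock{a}$ and $\gc$, and $\bar{e}^{\mutex}$ touches only locks and $\gc$. Between happening steps, therefore, $F \cup X$ is constant.

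Second, we pair compiled actions into durative triples. The precondition $\neg\running{a}$ on $\bar{a}^\vdash$ and the requirement $\running{a}$ on the end forces starts and ends of each $a$ to strictly alternate. Together with $\opencounter = 0$ in the goal, every start is matched by the next end, which is exactly the pairing used in $\tilde{\pi}$. This gives condition 6, non-self-overlap; strictness in the sense $t_j > t_k + d_k$ needs the observation that an end and a re-start at the same time are allowed only in the order end-then-start. I will check whether the lock construction or the semantics permits this and, if needed, read constraint 6 accordingly. Since $\clock{a}$ is reset at the start and grows at rate 1 while $\running{a}$ holds, at the end it equals the elapsed time. The guards $\lb{a} \le \clock{a} \le \ub{a}$, or $\clock{a} = \lb{a}$ for fixed actions, then give the duration bounds. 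The fact that $\bar{e}_a^\leftrightarrow$ never fires, evaluated in every state at every superdense step while $\running{a}$ holds, gives condition 4, noting that the states $s_w$ for $j < w \le k$ are exactly those seen while $a$ runs.

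The main obstacle is conditions 2, 3 and 5 at a single time point, where the compiled sequence is totally ordered and interleaved with events. I would prove that within one time step, after $\bar{e}^{\mutex}$ resets all locks, the lock preconditions and effects enforce pairwise non-interference. Take two happenings $a$ before $b$. If $f \in \vrswrite{a} \cap \vrsread{b}$, then $a$ cleared $\alock{f}$ or $\ilock{f}$, both required by $b$. If $f \in \vrsread{a} \cap \vrswrite{b}$, then $a$ cleared $\rlock{f}$, required by $b$ for any write. If $f \in \vrsassign{a} \cap \vrsassign{b}$, then $\alock{f}$ was cleared. This is condition 5.

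Given non-interference, each happening's precondition evaluated in the intermediate compiled state agrees with its value in $s_j$, because no earlier happening wrote a variable it reads. This gives condition 2. The composed effects commute and equal the compiled result, since concurrent increments to the same fluent commute and reads are unaffected; the final state therefore equals $s_{j+1}$ for any ordering, which gives condition 3.

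Care is needed for the fixed-end events, which require $\gc = 0$ and so fire only after $\bar{e}^{\mutex}$. They also carry lock preconditions, so the same argument covers them. Finally, condition 1 follows from $\bar{s}_0 = I$ on the original fluents and from $\bar{G}$ implying $G$ in the last state, which equals $s_m$ because the trace is constant after the last happening.
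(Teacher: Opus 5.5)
Your proposal follows the same route as the paper's proof: restrict the PDDL+ states at steps $t^s_j/\delta$ to $F \cup X$, and use $\alive$ for the overall conditions, $\opencounter$ and $\running{a}$ for the pairing and for forcing the fixed-duration end events, $\clock{a}$ for the duration bounds, and the same lock case analysis for condition 5; your derivation of conditions 2 and 3 from non-interference is more explicit than the paper's. The check you deferred on condition 6 comes out negative, because locks exist only for $f \in FX$: $\bar{a}^\dashv$ (or $\bar{e}_a^\dashv$) followed by $\bar{a}^\vdash$ at the same instant is allowed whenever the two snaps do not interfere on original fluents, so only the non-strict separation $t_j \ge t_k + d_k$ follows, which is also all the paper's one-line appeal to the $\running{a}$ dynamics establishes.
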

\begin{proof} 
Since $(\bar{\pi^+}, t_e)$ is a valid plan for a PDDL+ problem, let $\bar{s}_0, \ldots \bar{s}_{\bar{m}}$ bet the states induced by the plan by the discrete semantics of PDDL+. 

We have to show that the six semantic conditions for $\tilde{\pi}$ outlined in the semantics explanation of temporal planning hold. 

First, we ensure the syntactical properties of $\tilde{\pi}$. It is easy to see that every instantaneous action in the plan has a duration of 0 by definition; moreover, the duration $d_a$ of every durative action with fixed duration $a \in A^{fix}$ also trivially satisfies $\lb{a} \le d_a \le \ub{a}$. For durative actions with non-fixed duration, we set the duration as the temporal distance between a consecutive pair of $(\bar{a}^\vdash, \bar{a}^\vdash)$ in $\bar{\pi^+}$. This is guaranteed to be in the $[\lb{a}, \ub{a}]$ interval because:
\begin{itemize}
    \item $\bar{a}^\vdash$ requires $\neg \running{a}$ and resets the counter $\clock{a}$ to $0$ and $\running{a}$ to true, while $\bar{a}^\vdash$ requires $\running{a}$ and $\lb{a} \le \clock{a} \le \ub{a}$
    \item The process $p_a$ starts when $\running{a}$ is set to true and increases $\clock{a}$ with derivative 1
\end{itemize}
Hence, the temporal difference between $\bar{a}^\vdash$ and $\bar{a}^\vdash$ is equal to the value of $\clock{a}$ in $\bar{a}^\vdash$ which is $\lb{a} \le \clock{a} \le \ub{a}$ by precondition.

To prove the semantic conditions, let $\bar{j} \defas \frac{t^s_j}{\delta}$ and we define a sequence of states $s_0, \ldots, s_m$ defined as:
$$
s_j = \bar{s}_{\bar{j}} \mid F \cup X
$$
that is, the j-th temporal state is the PDDL+ state at step $\frac{t^s_j}{\delta}$ restricted to the original problem fluents.
Note that this PDDL+ state exists because $t^s_j$ is derived from times (or differences of times) in $\bar{\pi}^+$ which are integer multiples of $\delta$ by definition.

Another preliminary consideration is that $\happenings{j} = \{\snapstart{a}, \snapend{b} \mid \bar{a}^\vdash, \bar{b}^\dashv \in \happenings{\bar{j}}^a\} \cup \{\snapend{a} \mid (t, a, \lb{a}) \in \bar{\pi}^+, a \in A^{fix}, t^s_j = t + \lb{a}\}$. This is by definition of $\tilde{\pi}$.

We will now show that semantic conditions 1-6 hold for $s_0, \ldots, s_m$.
\begin{enumerate}
    \item $s_0 = I$ because also $I \subseteq \bar{s}_0$. Moreover, $s_0 \models G$ because $\frac{t^s_m}{\delta} = \bar{m}$ and $\bar{s}_{\bar{m}} \models \bar{G}$ and $\bar{G} \rightarrow G$.
    
    \item For every $j$, the preconditions $\pre{a}$ of all $a \in \happenings{j}$ are trivially satisfied for the instantaneous actions, for all the starting snap actions and for $\snapend{a}$ with $a \in A^{var}$, because \pre{a} is also part of the preconditions of the corresponding activities in $\bar{A}$ which are also in $\happenings{\bar{j}}^a$ as noted above and thus true in $\bar{s}_{\bar{j}}$. The only case needing care is the termination of actions with fixed duration, which are encoded by means of the events $\bar{e}_a^\dashv$. It suffices to prove that for every $\bar{a}^\vdash \in \happenings{\bar{k}}^a$ with $a \in A^{fix}$, there is an instance of event $\bar{e}_a^\dashv$ guaranteed to be executable in $s_{\bar{k} + \frac{\lb{a}}{\delta}}^a$, because this event also checks the preconditions of $a$. Suppose that one or more of these event instances is not applied, then the plan $(\bar{\pi^+}, t_e)$ cannot be valid for the PDDL+ semantics, because of the $\opencounter$ dynamics, which counts the difference in number of opening actions and closing actions or events. The only way to apply a closing action is paired with an opening, so if we skip a single instance of $\bar{e}_a^\dashv$, we will result in a final non-zero $\opencounter$ value. Moreover, such event must happen at step $\bar{k} + \frac{\lb{a}}{\delta}$ because of the preconditions on $\clock{a}$ (identical reasoning as per the variable action durations).    
    
    \item Comparing the two semantics, it is easy to see that for every $j$, $\bar{s}_{\bar{j}}^{end}$ is computed analogously to $s_{j+1}$, noting again that ending events corresponding to ending snap actions in $\happenings{j}$ must be executed in $\bar{s}_{\bar{j}}^{end}$. Hence, $s_{j+1} = \bar{s}_{\bar{j+1}} \mid F \cup X$, because fluents in $F \cup X$ are unaffected by processes.
    
    \item Suppose, for the sake of contradiction, that there exists a state $s_w \not\models \overall{a}$ with $j < w \le k$ and $a$ starting in $t^s_j$ and ending in $t^s_k$. Note that every PDDL+ state $\bar{s}_i$ with $\bar{j} < i \le \bar{k}$ is such that $\bar{s}_i(\running{a})$ is true because of the effects of the compiled snap actions as above. Hence, \overall{a} must be true in  $s_w$, otherwise $e_a^\leftrightarrow$ would trigger, making the plan invalid because of the $\alive \assign \bot$ effect that would make the goal unreachable.
    
    \item Suppose, for the sake of contradiction, that there is a pair of actions $a \not= b$ in $\happenings{j}$ (for some $j$) violating the mutex constraint. Then, one of the following three cases must occur.
    \begin{itemize}
        \item $\exists f \in \vrsread{a} \cap \vrswrite{b}$ This is prevented by the preconditions and effects of the compiled actions or events corresponding to $a$ and $b$ (that we indicate with $\bar{a}$ and $\bar{b}$). By corresponding we mean that if $a$ is an instantaneous action, $\bar{a} = \bar{a}^i$; if $a$ is a starting snap action, $\bar{a} = \bar{a}^\vdash$, if $a$ is a ending snap action, either $\bar{a} = \bar{a}^\dashv$ or $\bar{a} = \bar{e}_a^\dashv$, and analogously for $b$.
        
        Now suppose $\bar{a}$ is before $\bar{b}$ in $\bar{\pi}^+$, $\rlock{f}$ is required to be true by $\lockformula{b}$, but $\rlock{f}$ is set to false by $\bar{a}$, and cannot be set to true if not by letting time advance. If the order in $\bar{\pi}^+$ is reversed, both \ilock{f} and \alock{f} are required to be true by $\bar{a}$, but either of them is set to false by the effects of $\bar{b}$ (because $f$ is either assigned or incremented). 
        
        Note that once \alock{f}, \ilock{f} or \rlock{f} is set to false, only $\bar{e}^{\mutex}$ can reset them, but this cannot be done after either $\bar{a}$ or $\bar{b}$ are executed, because $\bar{e}^{\mutex}$ requires $\gc$ to be positive, while both $\bar{a}$ and $\bar{b}$ require it to be 0 and only the process $\bar{p}^\mutex$ can set $\gc$ to a positive value, but only in the subsequent step. (Practically, $\bar{e}^{\mutex}$ is the first event of every step, because $\bar{p}^\mutex$ resets to true its precondition when transitioning to anew step by letting time elapse). This leads to the contradiction. 
        \item $\exists f \in \vrsread{b} \cap \vrswrite{a}$ This case is the symmetric of the previous one.
        \item $\exists f \in \vrsassign{a} \cap \vrsassign{b}$ Similarly to the previous case, the first happening sets \alock{f} to false and both require it to be true, leading to the contradiction for the same reasoning as above.
    \end{itemize}
    
    \item Non self-overlapping is an immediate consequence of the $\running{a}$ dynamics: we forbid to start an action if $\running{a}$ is true and $\running{a}$ is kept true exactly during each action instance.
\end{enumerate}

\end{proof}

\begin{theorem}[Completeness]
    Let $\Pi$ be a temporal planning problem admitting a valid plan $\pi^t$, then there exists a $\delta$ under which there is a valid plan $\bar{\pi}$ for $\bar{\Pi}$.
\end{theorem}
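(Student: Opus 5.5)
We construct $\bar{\pi}$ directly from $\pi^t = \{(t_1,a_1,d_1),\ldots,(t_n,a_n,d_n)\}$ and choose $\delta$ so that every relevant time is a multiple of it. Since all $t_i$, $d_i$, $\lb{a}$ and $\ub{a}$ are rational, we take $\delta$ to be $1/D$, where $D$ is a common denominator of all of them. Then every $t^s_j$ is an integer multiple of $\delta$, and every start time, end time and duration bound becomes a whole number of steps.

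\textbf{The plan.} For each $(t_i,a_i,0)$ with $a_i \in A^i$ we add $(t_i,\bar{a}_i^i)$. For each durative action we add $(t_i,\bar{a}_i^\vdash)$. If $a_i \in A^{var}$ we also add $(t_i+d_i,\bar{a}_i^\dashv)$; if $a_i \in A^{fix}$, its end is performed by the event $\bar{e}_{a_i}^\dashv$. We set $t_e \defas t^s_m$.

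\textbf{Ordering within a step.} At each step $\bar{j} = t^s_j/\delta$ we order the actions of $\happenings{j}$ arbitrarily. The event completion runs as follows.
\begin{enumerate}
\item If $\bar{j}>0$, then $\gc=\delta>0$, so $\bar{e}^\mutex$ fires and resets all locks and $\gc$ to $0$. This uses the superdense picture of $\gc$ above; at $\bar{j}=0$ the locks are already $\top$ and $\gc=0$ by $\bar{I}$.
\item The events $\bar{e}_a^\dashv$ of fixed actions ending at $t^s_j$ fire, since now $\clock{a}=\lb{a}$ and $\gc=0$.
\item The planned actions are applied in the chosen order.
\end{enumerate}

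\textbf{Invariant.} We prove by induction on $\bar{j}$ that the following holds:
\begin{itemize}
\item the state restricted to $F\cup X$ equals $s_j$ at the steps $\bar{j}=t^s_j/\delta$, and equals the last $s_j$ at the intermediate steps;
\item $\running{a}$ is true exactly on the open intervals of the plan;
\item $\clock{a}$ equals the time elapsed since the start of $a$;
\item $\opencounter$ equals the number of open actions;
\item $\alive$ remains true.
\end{itemize}

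\textbf{Main obstacle: event completion and the locks.} Event completion is a fixed point we do not control, so we must check both that no undesired event fires and that each happening is applicable.
\begin{itemize}
\item \emph{Overall events.} $\bar{e}^\leftrightarrow_a$ never fires, by condition 4. One subtlety is that $\running{a}$ holds in the state right after the start, before time elapses. Condition 4 covers $s_w$ for $j<w\le k$, and $s_{j+1}$ is exactly the post-happening state. The state at the end step $k$ is checked before $\bar{e}^\dashv$ or $\bar{a}^\dashv$ clears $\running{a}$, and it satisfies $\overall{a}$ because $w=k$ is included.
\item \emph{Expire events.} They never fire, because $d_i\le\ub{a}$ and the action ends exactly when $\clock{a}=d_i$.
\item \emph{Locks.} The lock preconditions hold along the chosen order by condition 5. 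After a reset, a lock $\rlock{f}$, $\alock{f}$ or $\ilock{f}$ is falsified only by a happening reading, assigning or incrementing $f$. A later happening requiring that lock would then interfere with the earlier one in exactly the sense forbidden by condition 5. The case of two increments is allowed, and is consistent because incrementing requires only $\rlock{f}\wedge\alock{f}$ and falsifies only $\ilock{f}$.
\item \emph{Preconditions.} Each happening's precondition is evaluated in an intermediate state rather than in $s_j$. By non-interference, the fluents it reads are untouched by earlier happenings in the same step, so it sees the values of $s_j$, and condition 2 applies. The same reasoning shows the sequential result equals $b_0(\cdots b_k(s_j))$ from condition 3.
\item \emph{Starts and ends.} $\neg\running{a}$ at each start follows from condition 6, because the strict inequality means the previous instance has ended at an earlier time. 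Condition 6 also ensures that an end event and a new start of the same action never fall at the same time.
\end{itemize}

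\textbf{Conclusion.} Between happenings, only $\bar{p}_a$ and $\bar{p}^\mutex$ change values, and they touch only $\clock{a}$ and $\gc$. At $t_e$ all actions are closed, so $\opencounter=0$; $\alive$ holds; and $G$ holds by condition 1. Hence $\bar{G}$ is satisfied and $\bar{\pi}$ is valid for $\bar{\Pi}$.
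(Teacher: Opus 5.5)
Your construction is the paper's: the same plan, a $\delta$ that makes every $t^s_j$ a multiple, and the same step-level correspondence for $F\cup X$, $\running{a}$, $\clock{a}$, $\opencounter$ and $\alive$. You go further than its sketch by looking inside the event completion of a step. However, the step ``$\bar{e}^{\leftrightarrow}_a$ never fires, by condition 4'' does not go through.

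In $\bar{\Pi}$ an event completion follows \emph{every} action of a step. So $\bar{e}^{\leftrightarrow}_a$ is tested in each intermediate superdense state where $\running{a}$ holds, whereas condition 4 only constrains the step states $s_{j+1},\dots,s_k$. Moreover, $\overall{a}$ contributes to no read set $\vrsread{\cdot}$, so condition 5 allows simultaneous happenings that write fluents of $\overall{a}$. This breaks your argument in two places:
\begin{itemize}
\item At the start step, with an arbitrary order, the state right after $\bar{a}^{\vdash}$ is in general not $s_{j+1}$. Take $\snapstart{a}$ together with an instantaneous action that establishes $\overall{a}$.
\item At the end step, for $a\in A^{var}$, a fixed-duration event $\bar{e}^{\dashv}_b$ fires before any action. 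It may falsify $\overall{a}$ before $\bar{a}^{\dashv}$ clears $\running{a}$.
\end{itemize}

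No ordering repairs this in general. Let $\overall{a} = (x = 0)$, and let two instantaneous actions $x \increment 1$ and $x \increment -1$ execute together strictly inside $a$. Condition 5 holds and every $s_w$ has $x = 0$. Yet whichever compiled action comes first leaves $x = \pm 1$ with $\running{a}$ true, and the following event completion fires $\bar{e}^{\leftrightarrow}_a$, falsifying $\alive$ for good. Now guard both actions by a fluent that $\snapstart{a}$ sets and $\snapend{a}$ clears, and let each set a distinct goal predicate. This gives a problem with a valid temporal plan and no valid plan for $\bar{\Pi}$ under any $\delta$. So the gap cannot be closed by a finer argument. An extra hypothesis on simultaneous writes to invariants of running actions is needed, or a different encoding of overall conditions. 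The paper's sketch, which only lists the step states $\bar{s}_j$, is silent on this too.

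A smaller mismatch affects your lock argument. An increment falsifies $\ilock{f}$, which an assignment requires. An assignment falsifies $\alock{f}$, which an increment requires. So the locks forbid assigning and incrementing the same $f$ at the same time. Condition 5, however, only excludes $\vrsread{a}\cap\vrswrite{b}$, $\vrsread{b}\cap\vrswrite{a}$ and $\vrsassign{a}\cap\vrsassign{b}$, so it allows that pair when neither action reads $f$. Your claim that a blocked lock always signals interference ``forbidden by condition 5'' is therefore false in this case. You must either read condition 5 as also excluding $\vrsassign{a}\cap\vrsincrease{b}$, or state this as an assumption. The first reading is implicitly required by the ``arbitrary ordering'' in condition 3, since such pairs are order-dependent.
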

\begin{proof}(Sketch)
    Let $\pi^t = \{(t_1, a_1), \cdots, (t_n, a_n)\}$ and let its induced sequence of states $s_0, \ldots, s_m$ with times $t^s_0, \ldots t^s_m$.
    We define the PDDL+ plan $(\bar{\pi}^+, t_e)$ as $t_e \defas t^s_m + 1$ and 
    $$
        \bar{\pi}^+ \defas ( (t^s_j, \bar{h}) \mid h \in \happenings{j}, 0 \le j < m)
    $$
    with
    $$
    \bar{h} \defas
    \begin{cases}
        \bar{a}^i & \mbox{if } h \in A^i \\
        \bar{a}^\vdash & \mbox{if } h = \snapstart{a} \\
        \bar{a}^\dashv & \mbox{if } h = \snapend{a} \mbox{ and } a \in A^{var}       
    \end{cases}
    $$
    (we assume $\bar{\pi}^+$ is sorted according to the time as per the semantics, and the order of simultaneous elements is arbitrary).

    Let $\delta$ be a rational such that for any $0 \le j \le m$:
    $$
        \frac{t^s_j}{\delta} \in \mathbb{Z}
    $$
    (For example, we can define $\delta$ as the GCD of all numerators of the $t^s_j$’s divided by the LCM of their denominators, since we model timings as rational numbers. However, any choice of $\delta$ that makes the division yield an integer would work.)

    Given a time $t \in \mathbb{Q}_{\ge 0}$, we define the index before , written $ib(t)$, as the largest $j$ s.t. $t^s_j \le t$.

    We prove that $(\bar{\pi}^+, t_e)$ is a valid plan for $\bar{\Pi}$ when using $\delta$ as discretization step.

    The plan is obviously well-formed, because each time is a multiple of $\delta$. 

    We define a sequence of states $\bar{s}_0 \ldots, \bar{s}_{\bar{m}}$ with $\bar{m} \defas \frac{t_e}{\delta}$, where $\bar{s}_0 = \bar{I}$ and for all $0 < j < \bar{m}$ we define $\bar{s}_j(x)$ per cases.
    \begin{itemize}
        \item $s_{ib(j \delta)}(x)$ if $x \in F \cup X$: original fluents values are aligned with the temporal trace. The $j$-th PDDL+ state corresponds to time $\delta j$ in the temporal trace.
        \item $\bar{s}_j(\alive) \defas \top$: \alive is always true.
        \item $\bar{s}_j(\gc) \defas \delta$: due to $\bar{p}^\mutex$, at the beginning of each step \gc is set to $\delta$, then $\bar{e}^{\mutex}$ immediately sets it to 0.
        \item The \running{a} predicate is set to true while an action is running: 
        $$
        \bar{s}_j(\running{a}) \defas
        \begin{cases}
            \top & \mbox{if } \exists i . t_i < j \delta \le t_i+d_i \mbox{ and } a_i = a\\
            \bot & \mbox{otherwise}
        \end{cases}
        $$
        \item Similarly, the $\opencounter$ fluent is set to the number of actions opened and not yet closed \emph{before} time $\delta j$:
        $$
        \bar{s}_j(\opencounter) \defas \sum_{k = 0}^{ib(\delta j) - 1}  |\{\snapstart{a} \in \happenings{k}\}|  - |\{\snapend{a} \in \happenings{k}\}|
        $$
        \item The \clock{a} fluent evolves according to $\bar{p}_a$, fluent, so we simply define:
        $$
        \bar{s}_j(\clock{a}) \defas
        \begin{cases}
            \bar{s}_{j-1}(\clock{a}) + \delta & \mbox{if } \bar{s}_j(\running{a}) \\            
            \bar{s}_{j-1}(\clock{a}) & \mbox{otherwise}
        \end{cases}
        $$
        \item The lock predicates \alock{f}, \ilock{f} and \rlock{f} are set to false when there is an happening at index $k$  that assigns, increases or reads $f$ with $t^s_k = \delta j$:
        $$
        \bar{s}_j(\alock{f}) \defas
        \begin{cases}
            \bot & \mbox{if $\exists k, a$ s.t. }  t^s_k = \delta j, a \in \happenings{k} \mbox{ and } f \in \vrsassign{a}\\
            \top & \mbox{otherwise}
        \end{cases}
        $$
        $$
        \bar{s}_j(\ilock{f}) \defas
        \begin{cases}
            \bot & \mbox{if $\exists k, a$ s.t. }  t^s_k = \delta j, a \in \happenings{k} \mbox{ and } f \in \vrsincrease{a}\\
            \top & \mbox{otherwise}
        \end{cases}
        $$
        $$
        \bar{s}_j(\rlock{f}) \defas
        \begin{cases}
            \bot & \mbox{if $\exists k, a$ s.t. }  t^s_k = \delta j, a \in \happenings{k} \mbox{ and } f \in \vrsread{a}\\
            \top & \mbox{otherwise}
        \end{cases}
        $$
    \end{itemize}

    The sequence of states $\bar{s}_0 \ldots, \bar{s}_{\bar{m}}$ constructed in this way satisfies all the PDDL+ semantic constraints for $\bar{\Pi}$. 
\end{proof}

}{}





\begin{table}[t]
    \scriptsize
    \centering
        \resizebox{\columnwidth}{!}{%
        \renewcommand*{\arraystretch}{0.9}
        \begin{tabular}{@{$\;\;$}l@{$\;\;$}|@{$\;\;$}c|@{$\;\;$}c|@{$\;\;$}c@{$\;\;$}|@{$\;\;$}c@{$\;\;$}|@{$\;\;$}c@{$\;\;$}|@{$\;\;$}c@{$\;\;$}|@{$\;\;$}c@{$\;\;$}|@{$\;\;$}c@{$\;\;$}}
        Domain & ENHSP & ENHSP & ARIES & OPTIC & TAMER & TFLAP & Next- & Patty \\
               & LG    & WA    &       &       &       &       & FLAP          \\
        \hline
        MatchCellar  & 7 & 12 & \textbf{20} & 9 & 7 & \textbf{20} & 2 & 4 \\
        MaJSP  & \textbf{20} & 19 & 18 & N/A & \textbf{20} & N/A & N/A & N/A \\
        T-Plant-Wat & \textbf{20} & 16 & 15 & 20 & 12 & 0 & 0 & 13 \\
        T-Sailing & 11 & \textbf{20} & 6 & 7 & 3 & 2 & 7 & 2 \\
        \hline
        Total (80) & 58 & \textbf{67} & 59 & 36 & 42 & 22 & 9 & 19 \\
        \end{tabular}%
        }
    \caption{Coverage analysis domain by domain, planner by planner. Bold for best, N/A for Not Applicable.}    \label{fig:coverage}
\end{table}

\begin{figure}[tb]
    \centering
    \begin{minipage}{0.55\columnwidth}
        \centering
        \includegraphics[width=\textwidth]{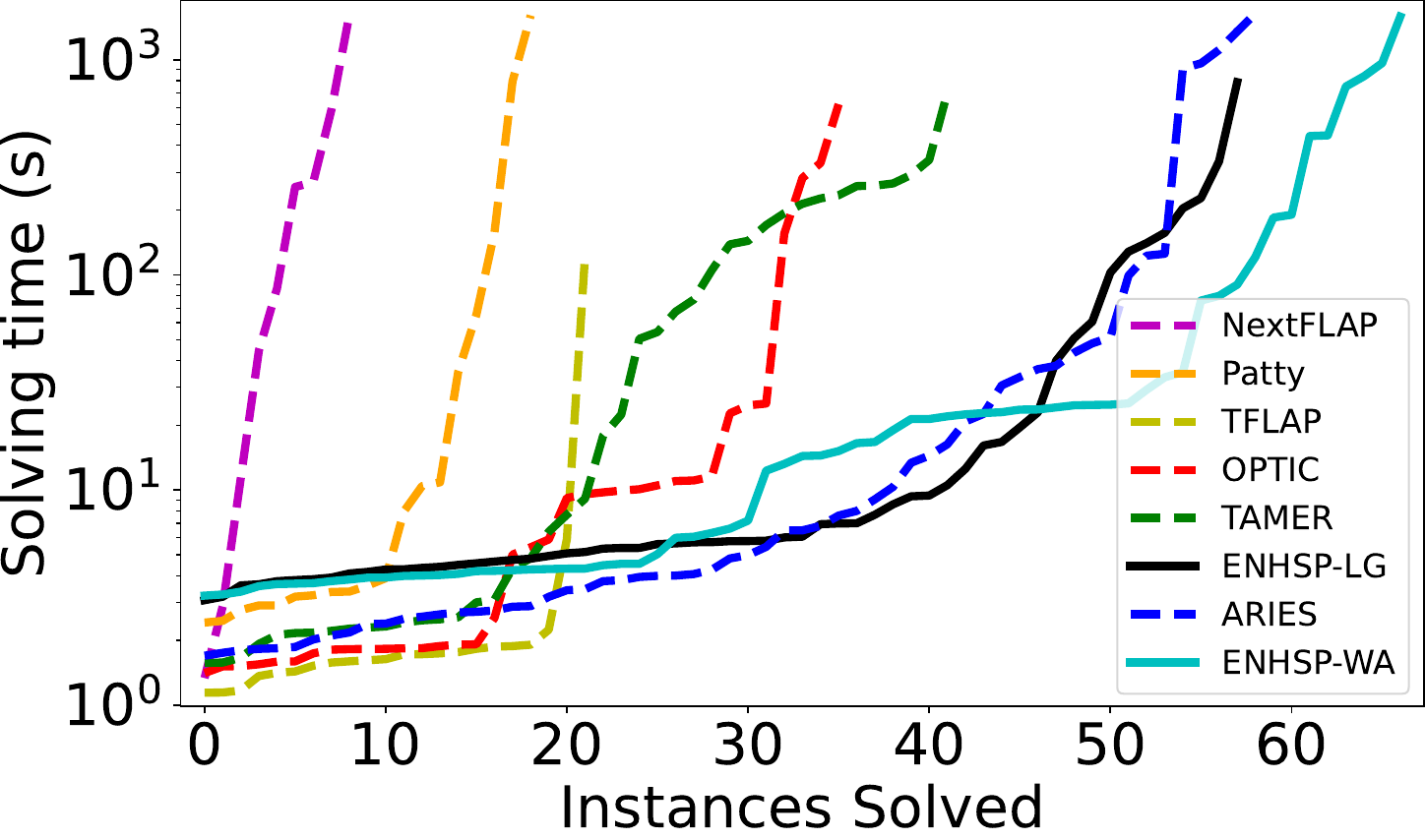}
    \end{minipage}
    \hfill
    \begin{minipage}{0.4\columnwidth}
        \centering
        \includegraphics[width=\textwidth]{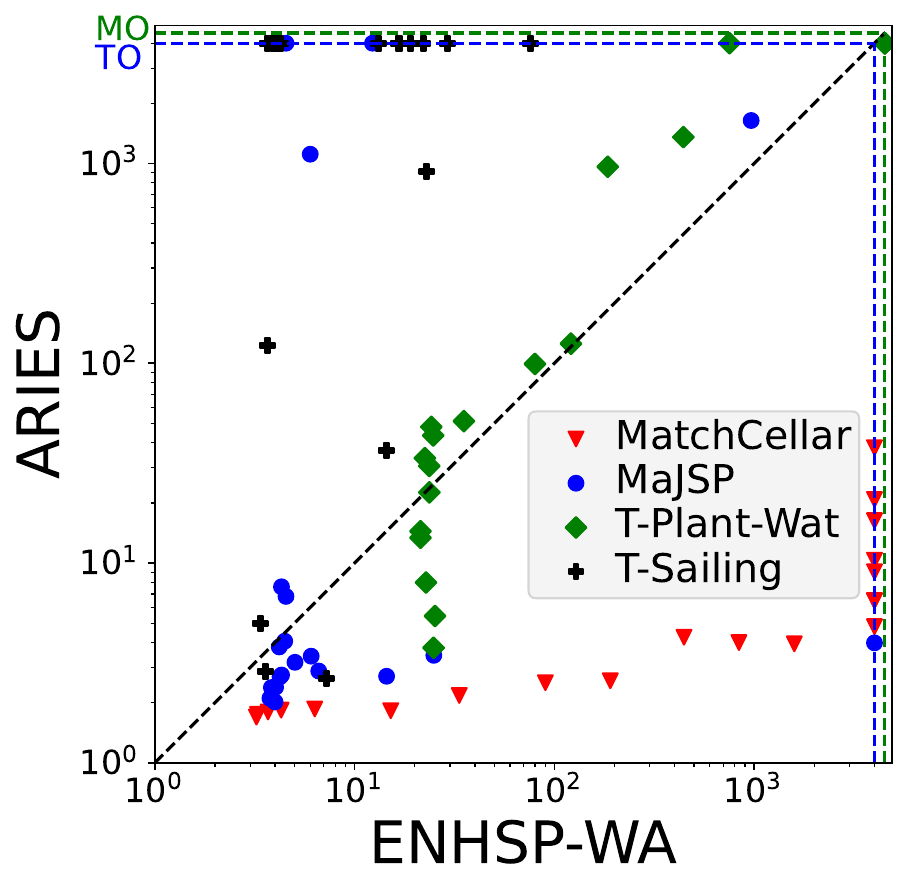}
    \end{minipage}
    \caption{Cactus (survival) plot (left) and run-time scatter plot for the planners with highest coverage (right).}
    \label{fig:combined}
\end{figure}

\section{Experimental Evaluation}

We experimented with our compilation over a selection of temporal numeric domains. We focused our attention on problems requiring intertwined reasoning between numeric and temporal aspects, where the concurrency of the durative actions is necessary to solve the instances. As representative of temporally interesting domains, we took the classic Matchcellar IPC domain and MAJSP from \cite{tpack}. Then we introduce two new domains, T-Sailing, and T-Plant-Watering. T-Sailing extends Sailing \cite{scala:16:ijcai} by requiring a boat not only to rescue the persons in some specific area of the cartesian space, but also to do so under a specific deadline. If the boat arrives too late, the person cannot be saved anymore. T-Plant-Watering extends Plant-Watering \cite{guillem:15:fstrips} by imposing temporal constraints between pouring and opening the tap. The task becomes a collaborative activity involving two distinct agents: one carries the pump used to water the plants, but can begin watering only when the other agent simultaneously performs the task of opening the tap. All benchmarks are available at \url{https://github.com/hstairs/time2processes}.

For each domain we have 20 instances, mostly scaling with the number of objects.
Our analysis focuses on coverage (number of solved instances per domain) and run-time.
ENHSP is used as the PDDL+ planner, run with two different engines, i.e., lazy greedy best-first search (ENHSP-LG) and WA$^*$ (ENHSP-WA), both with the $h^{mrp}$ heuristic \cite{scala:20:hmrp}. In LG, we used focus search as in \citet{scala:25:lgbfs}; in WA$^*$ we use $w=4$. The compiler is implemented within the \texttt{unified\_planning} library \cite{micheli:24:up}, and also supports delayed effects and timed initial literals; roughly, we emulate them with events triggered at the proper time (we omit the formal description due to space constraints).  
We compare the compilation with native state-of-the-art temporal planners, i.e., ARIES \cite{aries}, NextFLAP and TFLAP \cite{nextflap}, OPTIC \cite{optic}, TAMER \cite{tamer} and Patty \cite{patty}.
Experiments were run on an AMD EPYC 7413; 1800s timeout, 20 GB memory limit.

\textbf{Results.} Figure \ref{fig:coverage} shows per-domain coverage. ENHSP-WA got the highest coverage. (Some planners do not support MAJSP for lack of delayed effects support.) For purely temporal domains, temporal planners are faster, yet both ENHSP engines proved competitive, especially in MAJSP. Over temporal numeric domains, ENHSP-WA provided superior performance overall.
Figure \ref{fig:combined} (right) shows a pairwise analysis on run-time for the two best performing planners ARIES and ENHSP-WA. ARIES scales better in Matchcellar, ENHSP-WA better over the temporal numeric domains, highlighting a great deal of complementarity. Finally, Figure \ref{fig:combined} (left) shows the number of instances solved over time,
confirming the strength of our compilation.

\ifthenelse{\boolean{arxiv}}{
\section{Conclusion}

In this paper we presented a compilation from PDDL 2.1 level 3 into PDDL+, proving its soundness and completeness. It was commonly known that PDDL+ could express durative actions, but in thi spaper we provide the first formal account of this fact, providing mechanisms to deal with the subtleties of the PDDL semantics. In particular, we provide a mechanism to faithfully impose durative conditions and the ``no-moving-target rule'' presented in the PDDL 2.1 paper \cite{pddl21}. Our compilation is not only of theoretical interest, but it is also shown to be useful on complex temporal numeric planning problems.

As future work, we would like to extend the compilation to PDDL 2.1. level 4, hence including continuous change. Moreover, we would like to study an alternative compilation targeting PDDL+ with a continuous time semantics.
}{
\newpage
}

\section*{Acknowledgments}
Andrea Micheli and Alessandro Valentini have been partially supported by the STEP-RL project funded by the European Research Council under GA n. 101115870. Enrico Scala has been supported by the Italian Ministry of University and Research within the PRIMA 2024 programme project "Optimizing Water Resources in Coastal Areas using Artificial Intelligence" (AI4WATER -- D53C25000510006)

\bibliography{refs}

@article{gigante22,
  author       = {Nicola Gigante and
                  Andrea Micheli and
                  Angelo Montanari and
                  Enrico Scala},
  title        = {Decidability and complexity of action-based temporal planning over
                  dense time},
  journal      = {Artif. Intell.},
  volume       = {307},
  pages        = {103686},
  year         = {2022},
  url          = {https://doi.org/10.1016/j.artint.2022.103686},
  doi          = {10.1016/J.ARTINT.2022.103686},
  bibsource    = {dblp computer science bibliography, https://dblp.org}
}

@inproceedings{scala:25:lgbfs,
	author = {Scala, Enrico and Bonassi, Luigi},
	title = {On Using Lazy Greedy Best-First Search with Subgoaling Relaxation in Numeric Planning Problems},
	year = {2025},
	booktitle = {Proceedings International Conference on Automated Planning and Scheduling, {ICAPS} 2025},
	pages = {245 – 249},
	doi = {10.1609/icaps.v35i1.36125}
}

@inproceedings{scala:20:hmrp,
  author       = {Enrico Scala and
                  Alessandro Saetti and
                  Ivan Serina and
                  Alfonso Emilio Gerevini},
  title        = {Search-Guidance Mechanisms for Numeric Planning Through Subgoaling
                  Relaxation},
  booktitle    = {Proceedings International Conference on Automated Planning and Scheduling, {ICAPS} 2020},
  pages        = {226--234},
  publisher    = {{AAAI} Press},
  year         = {2020}
}

@article{pddl21,
  title={{PDDL2.1}: An extension to {PDDL} for expressing temporal planning domains},
  author={Maria Fox and Derek Long},
  journal={Journal of artificial intelligence research},
  year={2003}
}

@article{pddlplus,
  author	= {Maria Fox and Derek Long},
  title		= {Modelling Mixed Discrete-Continuous Domains for Planning},
  journal	= {Journal of Artificial Intelligence Research},
  year		= {2006}
}

@inproceedings{pddlplus-discrete,
    title={{On the Notion of Plan Quality for PDDL+}},
    volume={35},
    url={https://ojs.aaai.org/index.php/ICAPS/article/view/36106},
    DOI={10.1609/icaps.v35i1.36106},
    booktitle={Proceedings of the International Conference on Automated Planning and Scheduling},
    author={Percassi, Francesco and Scala, Enrico and Vallati, Mauro},
    year={2025},
    month={Sep.},
    pages={102-111}
}

@inproceedings{popf,
	author    = {Amanda Jane Coles and
	Andrew Coles and
	Maria Fox and
	Derek Long},
	title     = {Forward-Chaining Partial-Order Planning},
	booktitle = {Proceedings International Conference on Automated Planning and Scheduling, {ICAPS} 2010},
	year      = {2010}
}

@inproceedings{optic,
	author    = {J. Benton and
	Amanda Jane Coles and
	Andrew Coles},
	title     = {Temporal Planning with Preferences and Time-Dependent
	Continuous Costs},
	booktitle = {Proceedings International Conference on Automated Planning and Scheduling, {ICAPS} 2012},
	year      = {2012}
}

@inproceedings{tamer,
  title     = {Temporal Planning with Intermediate Conditions and Effects},
  author    = {Alessandro Valentini and
               Andrea Micheli and
               Alessandro Cimatti},
  booktitle = {{AAAI-20} Conference on Artificial Intelligence},
  year      = {2020}
}

@article{micheli:24:up,
  author       = {Andrea Micheli and
                  Arthur Bit{-}Monnot and
                  Gabriele R{\"{o}}ger and
                  Enrico Scala and
                  Alessandro Valentini and
                  Luca Framba and
                  Alberto Rovetta and
                  Alessandro Trapasso and
                  Luigi Bonassi and
                  Alfonso Emilio Gerevini and
                  Luca Iocchi and
                  F{\'{e}}lix Ingrand and
                  Uwe K{\"{o}}ckemann and
                  Fabio Patrizi and
                  Alessandro Saetti and
                  Ivan Serina and
                  Sebastian Stock},
  title        = {Unified Planning: Modeling, manipulating and solving {AI} planning
                  problems in Python},
  journal      = {SoftwareX},
  volume       = {29},
  pages        = {102012},
  year         = {2025},
  url          = {https://doi.org/10.1016/j.softx.2024.102012},
  doi          = {10.1016/J.SOFTX.2024.102012},
  bibsource    = {dblp computer science bibliography, https://dblp.org}
}

@inproceedings{scala:16:ijcai,
  author       = {Enrico Scala and
                  Patrik Haslum and
                  Sylvie Thi{\'{e}}baux},
  editor       = {Subbarao Kambhampati},
  title        = {Heuristics for Numeric Planning via Subgoaling},
  booktitle    = {Proceedings of the Twenty-Fifth International Joint Conference on Artificial Intelligence, {IJCAI} 2016},
  pages        = {3228--3234},
  publisher    = {{IJCAI/AAAI} Press},
  year         = {2016},
  url          = {http://www.ijcai.org/Abstract/16/457}
}

@inproceedings{guillem:15:fstrips,
  author       = {Guillem Franc{\`{e}}s and
                  Hector Geffner},
  editor       = {Ronen I. Brafman and
                  Carmel Domshlak and
                  Patrik Haslum and
                  Shlomo Zilberstein},
  title        = {Modeling and Computation in Planning: Better Heuristics from More Expressive Languages},
  booktitle    = {Proceedings of the Twenty-Fifth International Conference on Automated Planning and Scheduling, {ICAPS} 2015},
  pages        = {70--78},
  publisher    = {{AAAI} Press},
  year         = {2015},
  url          = {http://www.aaai.org/ocs/index.php/ICAPS/ICAPS15/paper/view/10613}
}

@inproceedings{tpack,
  author       = {Andrea Micheli and
                  Enrico Scala},
  title        = {Temporal Planning with Temporal Metric Trajectory Constraints},
  booktitle    = {The Thirty-Third {AAAI} Conference on Artificial Intelligence, {AAAI} 2019},
  pages        = {7675--7682},
  publisher    = {{AAAI} Press},
  year         = {2019},
  url          = {https://doi.org/10.1609/aaai.v33i01.33017675},
  doi          = {10.1609/AAAI.V33I01.33017675}
}

@inproceedings{aries,
  author       = {Arthur Bit{-}Monnot},
  editor       = {Kobi Gal and
                  Ann Now{\'{e}} and
                  Grzegorz J. Nalepa and
                  Roy Fairstein and
                  Roxana Radulescu},
  title        = {Enhancing Hybrid {CP-SAT} Search for Disjunctive Scheduling},
  booktitle    = {{ECAI} 2023 - 26th European Conference on Artificial Intelligence},
  pages        = {255--262},
  publisher    = {{IOS} Press},
  year         = {2023},
  url          = {https://doi.org/10.3233/FAIA230278},
  doi          = {10.3233/FAIA230278}
}

@article{nextflap,
  author       = {{\'{O}}scar Sapena and
                  Eva Onaindia and
                  Eliseo Marzal},
  title        = {A hybrid approach for expressive numeric and temporal planning with
                  control parameters},
  journal      = {Expert Syst. Appl.},
  volume       = {242},
  pages        = {122820},
  year         = {2024},
  url          = {https://doi.org/10.1016/j.eswa.2023.122820},
  doi          = {10.1016/J.ESWA.2023.122820},
  bibsource    = {dblp computer science bibliography, https://dblp.org}
}

@inproceedings{patty,
  author       = {Matteo Cardellini and
                  Enrico Giunchiglia},
  title        = {Temporal Numeric Planning with Patterns},
  booktitle    = {{AAAI-25} Conference on Artificial Intelligence},
  pages        = {26481--26489},
  publisher    = {{AAAI} Press},
  year         = {2025},
  url          = {https://doi.org/10.1609/aaai.v39i25.34848},
  doi          = {10.1609/AAAI.V39I25.34848}
}
\end{document}